\newtheorem{Proposition}{Proposition}
\newcommand{\R}{\mathbb{R}}
\DeclareMathOperator{\diag}{diag}
\def\tT{{\mbox{\tiny{T}}}}
\renewcommand{\bullet}{\,\begin{picture}(-1,2)(-1,-2)\circle*{0.5}\end{picture}\ \,}
\title{Invertible Neural Networks versus MCMC for
Posterior Reconstruction in Grazing Incidence X-Ray Fluorescence} 
\author{Anna Andrle\inst{1} \and Nando Farchmin\inst{1} \and Paul Hagemann \inst{1,2} \and Sebastian Heidenreich \inst{1} \and Victor Soltwisch \inst{1} \and Gabriele Steidl \inst{2}}
\institute{Department of Mathematical Modelling and Data Analysis \& Department of Radiometry with Synchrotron Radiation, Physikalisch-Technische Bundesanstalt Braunschweig und Berlin, Abbestrasse 2-12, D-10587 Berlin, Germany, \email{p.hagemann@campus.tu-berlin.de} 
\and
Institute of Mathematics, TU Berlin, Straße des 17. Juni 136, D-10623 Berlin, Germany}
\begin{document}

\maketitle
\date{\today}


\begin{abstract}
 Grazing incidence X-ray fluorescence is a non-destructive technique  for
 analyzing the geometry and compositional parameters of nanostructures appearing e.g. in computer
 chips.  In this paper, we propose to reconstruct the posterior parameter
 distribution given a noisy measurement generated by the forward model by an
 appropriately learned invertible  neural  network.  This network resembles the
 transport map from a reference distribution  to the posterior. We demonstrate
 by numerical comparisons that  our method can compete with established Markov
 Chain Monte Carlo approaches, while being more efficient and flexible in
 applications.
 \end{abstract}


\keywords{GIXRF, inverse problem, invertible neural networks, MCMC,
transport maps, Bayesian inversion}


\section{Introduction}
Computational progress is deeply tied with  making the structure of computer
chips smaller and smaller.  Hence there is a need for efficient methods that
investigate the critical dimensions of a microchip.  
Optical scattering techniques are frequently used for the characterization of
periodic nanostructures on surfaces in the semiconductor industry
\cite{henn2014improved,mack2008fundamental}. 
As a non-destructive technique, grazing
incidence X-ray fluorescence (GIXRF)  is of particular interest for many industrial
applications. 
Mathematically, the reconstruction of nanostructures, i.e., of their geometrical  parameters, 
can be rephrased in an inverse problem. Given grazing incidence X-ray fluorescence
measurements $y$, we want to recover the distribution of the parameters $x$ of a grating. 
To account for measurement errors, it appears to be crucial to take a Bayesian
perspective. The main cause of uncertainty is due to inexact measurements $y$,
which are assumed to be corrupted by additive Gaussian noise with different variance
in each component.

The standard approach to recover the distribution of the parameters are Markov Chain Monte Carlo 
(MCMC) based algorithms \cite{AFDJ2003}. 
Instead, in this paper, we make use of invertible
neural networks (INNs) \cite{dinh2017density,kruse2020hint} within the general concept of transport maps \cite{Marzouk_2016}. 
This means that we sample from a reference distribution and seek a diffeomorphic
transport map, or more precisely its approximation by an INN, 
which maps this reference distribution to the problem posterior. 
This approach has some advantages over standard MCMC--based methods:
i) Given a transport map, which is computed in an offline step, the generation of independent posterior samples essentially reduces to sampling the freely chosen reference distribution. Additionally, observations indicate that learning the
transport map requires less time than generating a sufficient amount of
independent samples via MCMC.  
ii) Although the transport map is conditioned on
a specific measurement, it can serve as a good initial guess for the transport related to similar measurements
or as a prior in related inversion problems.  Hence the effort to find a transport for different runs within the same experiment reduces drastically. 
An even more sophisticated way of using a pretrained diffeomorphism has been recently
suggested in \cite{siahkoohi2021preconditioned}.

Having trained the INN, we compare its ability to recover the posterior distribution
with the established MCMC method for fluorescence experiments.
Although, a similar INN approach with a slightly simpler noise model was recently also reported for reservoir characterization in 
\cite{rizzuti2020parameterizing},
we are not aware of any comparison of this kind in the literature.

The outline of the paper is as follows: 
We start with  introducing INNs with an appropriate loss function to sample posterior distributions of inverse problems
in Section \ref{sec:model}.  Here we follow the lines of  an earlier version of \cite{kruse2020hint}.
In particular, the likelihood function has to be adapted to our noise model with different variances in each component of the measurement for the application at hand.
Then, in Section  \ref{sec:application}, we describe the forward model in GIXRF in its experimental, 
numerical and surrogate setting.
The comparison of INN with MCMC posterior sampling in done in Section \ref{sec:numerics}.
Finally, conclusions are drawn and topics of further research are addressed in Section \ref{sec:conclusions}.

\section{Posterior Reconstruction by INNs} \label{sec:model}
In this section, we explain, based on \cite{kruse2020hint}, how the posterior
of an inverse problems can be analyzed using INNs.
In the following, products, quotients and exponentials of vectors are meant componentwise.
Denote by $p_x$ the density function of a distribution $P_X$ of a random variable 
$X\colon\Omega\to \mathbb{R}^d$. 
Further, let $p_{x|y} (\cdot|y)$ be the density function of the
conditional distribution  $P_{X|Y=y}$ of $X$ given the value of a random variable
$Y\colon\Omega\to\mathbb{R}^n$ at $Y = y \in \R^n$.  
We suppose that we have a differentiable forward model 
$f: \R^d \rightarrow \R^n$. 
In our applications the forward model will be given by the GIXRF method.
We assume that the measurements $y$ are corrupted by additive Gaussian noise 
$\mathcal{N}(0,b^2\diag(w^2))$, where the (positive) weight vector $w\in\R^n$ accounts for the different
scales of the measurement components. The factor $b^2 >0$ models the intensity
of the noise. 
In other words, 
$$
y = f(x) + \eta, 
$$
where $\eta$ is a realization of a $\mathcal{N}(0,b^2\diag(w^2))$ distributed random variable.
Then the sampling density reads as
\begin{align} 
  \label{eqn:likelihood}
  p_{y|x}(y|x) 
  = \frac{1}{(2\pi)^\frac{n}{2} b^n (\prod_{i=1}^n w_i)} 
  \exp\left( -\frac{1}{2} \left\Vert \frac{y-f(x)}{bw}\right\Vert^2 \right).
\end{align}
Given a measurement 
$y$
from the forward model,
we are interested in the inverse problem posterior distribution $P_{X|Y=y}$.
By Bayes' formula the inverse problem posterior density can be rewritten as
\begin{align*}
  p_{x|y}
  = \frac{p_{y|x} p_x}{\int_{\mathbb{R}^d} p_{y|x}(y|x)p_x(x)\ \mathrm{d} x}
  \propto p_{y|x}\, p_x.
\end{align*}
Let $p_\xi$ be the density function of an easy to sample distribution $P_\Xi$
of a random variable $\Xi\colon\Omega\to \mathbb{R}^d$. 
Following for example \cite{Marzouk_2016}, we want to find a differentiable and invertible map
$\mathcal{T}\colon \mathbb{R}^d\to \mathbb{R}^d$, 
such that $\mathcal{T}$
pushes $P_\Xi$ to $P_{X|Y = y}$, i.e., 
\begin{equation} \label{eq:push_forw}
P_{X|Y = y} = \mathcal{T}_\# P_\Xi \coloneqq P_\Xi \circ \mathcal{T}^{-1}.
\end{equation} 
Recall that $\mathcal{T}_\#p_\xi = p_\xi\circ \mathcal{T}^{-1} \vert \det
\nabla T^{-1}\vert$ for the corresponding density functions, where $\nabla
T^{-1}$ denotes the Jacobian of $T^{-1}$.

Once $\mathcal{T}$ is learned for some measurement $y$,  
sampling from the posterior $P_{X|Y = y}$ can be
approximately done by evaluating $\mathcal{T}$ at samples from the reference distribution
$P_\Xi$ (see also \cite{Marzouk_2016}).  
Since it
is in general hard or even impossible to find the analytical map $\mathcal{T}$, we aim to
approximate $\mathcal{T}$ by an invertible neural network 
$T = T(\bullet;\theta)\colon \mathbb{R}^d\to \mathbb{R}^d$ with network parameters
$\theta$.  
In this paper, we use a variation of the INN proposed in
\cite{dinh2017density}, see \cite{ardizzone2019analyzing}. 
More precisely, $T$ is a composition 
\begin{align} \label{eq:net}
  T =  T_L \circ P_L \circ \dots \circ  T_1 \circ P_1,
\end{align}
where $P_\ell$ are permutation matrices 
and $T_\ell$ are invertible mappings of
the form
\begin{equation}
  \label{eq:DefBlock}
  T_\ell(\xi_1,\xi_2) 
  = (x_1,x_2) 
  \coloneqq \left(\xi_1 \, \mathrm{e}^{s_{\ell,2}(\xi_2)} + t_{\ell,2}(\xi_2),\, 
          \xi_2 \, \mathrm{e}^{s_{\ell,1}(x_1)}   + t_{\ell,1}(x_1)   \right)
\end{equation}
for some splitting $(\xi_1,\xi_2) \in \mathbb{R}^{d}$ with $\xi_i \in \mathbb R^{d_i}$, $i=1,2$.  Here
$s_{\ell,2}, t_{\ell,2}: \mathbb R ^{d_2} \rightarrow \mathbb R ^{d_1}$ 
and 
$s_{\ell,1}, t_{\ell,1}: \mathbb R ^{d_1} \rightarrow \mathbb R ^{d_2}$ 
are ordinary feed-forward neural networks. 
The parameters $\theta$ of
$T(\cdot;\theta)$ are specified by the parameters of these subnetworks.
The inverse of the layers $T_\ell$ is analytically given by
\begin{equation}\label{eq:DefInvBlock}
    T_\ell^{-1}(x_1,x_2) 
    = (\xi_1,\xi_2) 
		\coloneqq \left( \big(x_1 - t_{\ell,2}(\xi_2) \big) \,\mathrm{e}^{-s_{\ell,2}(\xi_2)},\, 
             \big(x_2 - t_{\ell,1}(x_1)   \big) \,\mathrm{e}^{-s_{\ell,1}(x_1)} \right)
\end{equation}
and does not require an inversion of the feed-forward subnetworks. Hence the whole
map $T$ is invertible and allows for a fast evaluation of both forward and
inverse map.

In order to learn the INN, we utilize the Kullback-Leibler divergence as a measure of distance
between two distribution as loss function
\begin{align} 
  \label{eqn:loss_1}
  L(\theta) 
  \coloneqq \operatorname{KL} (T_{\#} p_\xi,\, p_{x|y}) 
  = \int_{\mathbb{R}^d} T_{\#} p_\xi  \log \left( \frac{T_{\#} p_\xi}{p_{x|y}} \right) \,\mathrm{d} x.
\end{align}
Minimizing the loss function $L$ by e.g. a standard stochastic
gradient descent algorithm requires the computation of the gradient of $L$.
To ensure that this is feasible, we rewrite the loss $L$ in the following way.

\begin{Proposition}
  Let $T = T(\bullet; \theta)\colon \mathbb{R}^d \rightarrow \mathbb{R}^d$ be a 
  Lebesgue measurable diffeomorphism parameterized by $\theta$.  Then, up to an additive constant, \eqref{eqn:loss_1} can be written 
  as
  \begin{align} 
    \label{eqn:loss_2}
    L(\theta) 
    = - \mathbb{E}_\xi \Bigl[ \log p_{y|x}\bigl(y|T(\xi)\bigr)
    + \log p_x\bigl( T(\xi)\bigr) + \log \vert \det \nabla T(\xi)\vert \Bigr].
  \end{align}
  For the Gaussian likelihood \eqref{eqn:likelihood}, this simplifies to
  \begin{align} 
    \label{eqn:loss_3}
    L(\theta) 
    &= \mathbb{E}_\xi \Bigl[\frac{1}{2b^2} \left\Vert \frac{y-(f\circ T)(\xi)}{w} \right\Vert^2
    - \log p_x\bigl( T(\xi)\bigr) - \log \vert \det \nabla T(\xi)\vert \Bigr].
  \end{align}
\end{Proposition}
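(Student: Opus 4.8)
The plan is to expand the Kullback--Leibler divergence \eqref{eqn:loss_1}, handle each part by the change-of-variables rule for the diffeomorphism $T$, and discard every summand that is independent of $\theta$. Writing $q \coloneqq T_{\#} p_\xi$ for brevity, I would first split the logarithm and obtain
\[
  L(\theta) = \int_{\mathbb{R}^d} q \log q \dx x - \int_{\mathbb{R}^d} q \log p_{x|y} \dx x,
\]
so that the negative entropy of $q$ and the cross term can be treated separately.

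The cross term is the easy one: by the defining property of the pushforward \eqref{eq:push_forw}, which gives $\int_{\mathbb{R}^d} g(x)\, q(x) \dx x = \mathbb{E}_\xi[g(T(\xi))]$ for measurable $g$, it equals $-\mathbb{E}_\xi[\log p_{x|y}(T(\xi))]$ at once. For the entropy term I would apply the substitution $x = T(\xi)$ together with the density transformation rule recalled in the text, $q = p_\xi\circ T^{-1}\,|\det\nabla T^{-1}|$. The key algebraic identity is the reciprocal Jacobian relation $|\det \nabla T^{-1}(T(\xi))| = |\det \nabla T(\xi)|^{-1}$, which turns the entropy term into $\mathbb{E}_\xi[\log p_\xi(\xi)] - \mathbb{E}_\xi[\log|\det \nabla T(\xi)|]$. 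As $p_\xi$ is fixed, the first expectation does not depend on $\theta$ and is absorbed into the additive constant.

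It then remains to trade $p_{x|y}$ for $p_{y|x}$ and $p_x$. Using Bayes' formula as stated above the proposition, $\log p_{x|y}(T(\xi)) = \log p_{y|x}(y|T(\xi)) + \log p_x(T(\xi)) - \log p_y(y)$, where the evidence $p_y(y) = \int_{\mathbb{R}^d} p_{y|x}(y|x) p_x(x) \dx x$ is constant in $\theta$ and joins the additive constant. Collecting the three $\theta$-dependent expectations yields \eqref{eqn:loss_2}. For the Gaussian case I would substitute the explicit likelihood \eqref{eqn:likelihood}: its prefactor $(2\pi)^{n/2} b^n \prod_{i=1}^n w_i$ contributes only a constant, while its exponent produces $\tfrac{1}{2b^2}\bigl\|(y-(f\circ T)(\xi))/w\bigr\|^2$ after pulling the scalar $1/b$ out of the norm, giving \eqref{eqn:loss_3}.

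The main obstacle is the bookkeeping in the entropy term: one must combine the density transformation formula with the reciprocal Jacobian identity correctly and verify that the change of variables is justified. This is precisely what the hypothesis that $T$ is a Lebesgue measurable diffeomorphism secures, since it makes $\nabla T$ invertible everywhere and lets the integral transform without boundary contributions; the remaining work is just identifying and dropping the $\theta$-independent constants.
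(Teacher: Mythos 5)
Your proposal is correct and follows essentially the same route as the paper's proof: a change of variables $x=T(\xi)$ with the reciprocal Jacobian identity, followed by Bayes' formula and discarding the $\theta$-independent terms $\mathbb{E}_\xi[\log p_\xi]$ and $\log p_y(y)$. The only cosmetic difference is that you split the KL divergence into entropy and cross-entropy parts before transforming, whereas the paper transforms the whole integrand at once and records the intermediate identity $\operatorname{KL}(T_{\#}p_\xi, p_{x|y}) = \operatorname{KL}(p_\xi, T^{-1}_{\#}p_{x|y})$.
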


\begin{proof}
  By definition of the push-forward density and the transformation formula
  \cite[Theorem 7.26]{Rudin}, we may rearrange 
  \begin{align*}
    &\operatorname{KL}\bigl(T_{\#}p_\xi, p_{x|y} \bigr)\\[1ex]
    &\qquad= \int_{\mathbb{R}^d} (p_\xi\circ T^{-1})(x)\, 
    \vert\det \nabla T^{-1}(x)\vert\, 
    \log\left( \frac{(p_\xi\circ T^{-1})(x) \vert\det \nabla T^{-1}(x)\vert}{p_{x|y}(x|y)}\right)\,\mathrm{d} x \\[1ex]
    &\qquad= \int_{\mathbb R^d} p_\xi(\xi)\, 
    \log\left(\frac{p_\xi(\xi)}{p_{x|y}(T(\xi)|y) \, \vert\det\nabla T(\xi)\vert}\right) \,\mathrm{d}\xi \\[1ex]
    &\qquad= \operatorname{KL}\bigl(p_\xi, T^{-1}_{\#}p_{x|y} \bigr)\\[1ex]
    &\qquad= -\mathbb{E}_{\xi}\left[
    \log p_{x|y}(T(\xi)| y ) + \log \vert\det\nabla T(\xi)\vert\right]  
    + \mathbb{E}_{\xi}[\log p_\xi].
  \end{align*}
  Now Bayes' formula yields
  \begin{align*}
    \log p_{x|y}(T(\xi)|y)
    = \log \Bigl( p_{y|x}(y|T(\xi))\, p_x(T(\xi))\Bigr) - \log p_y(y).
  \end{align*}
  Ignoring the constant terms since they are irrelevant for the minimization of
  $L$, we obtain \eqref{eqn:loss_2}. The rest of the assertion follows
  by~\eqref{eqn:likelihood}. 
  \hfill $\Box$
\end{proof}

The different terms on the right-hand side of \eqref{eqn:loss_2}, resp.
\eqref{eqn:loss_3} are interpretable: The first term forces the samples pushed
through $T$ to have the correct forward mapping, the second assures that the
samples are pushed to the support of the prior distribution and the last term
employs a counteracting force to the first. To see this note that the first
term is minimized if $T$ pushes $p_\xi$ to a delta distribution, whereas the
log determinant term would be unbounded in that case.  Hence there is an
equilibrium between those terms directly influenced by the error parameter $b$,
i.e. as $b$ tends to zero, the push-forward tends to a delta distribution.

The computation of the gradient of the empirical loss function $L$ corresponding to \eqref{eqn:loss_3}
requires besides standard differentiations of elementary functions and of the network $T$,
the differentiation of i) the forward model $f$ within the chain rule of $f \circ T$, ii) of $p_x$, and 
iii) of $|\det \nabla T|$. This can be done by the following observations:
\begin{enumerate}[label=\roman*)]
  \item In the next section, we describe how a feed-forward neural network can
    be learned to approximate the forward mapping in GIXRF.  Then this network
    will serve as forward operator and its gradient can be computed by standard
    backpropagation.
  \item The prior density $p_x$ has to be known.  In our application, we can
    assume that the geometric parameters $x$ are uniformly distributed in a
    compact set which is specified for each component of $x \in \mathbb R^d$ in
    the numerical section.  This has the consequence that the term $\log p_x$
    is constant within the support of $p_x$ and is not defined outside.  Therefore, we
    impose an additional boundary loss  that penalizes samples out of the
    support of the prior, more precisely, if the prior is supported in $[s_1, t_1]
    \times ... \times [s_d, t_d] $, then we use
		\begin{align*}
      L_\mathrm{bd}(x)
      = \lambda_\mathrm{bd}\sum_{i=1}^d\bigl(
      \operatorname{ReLU}(x_i-t_i)+\operatorname{ReLU}(s_i-x_i)
      \bigr),
      \qquad\lambda_\mathrm{bd} > 0.
    \end{align*}
    Note that the non-differentiable $\operatorname{ReLU}$ function at zero can
    be replaced by various smoothed variants.
  \item For general networks, $\log |\det \nabla T|$ in the loss function is
    hard to compute and is moreover either not differentiable or has a huge
    Lipschitz constant.  However, it becomes simple for INNs due to their
    special structure.  Since $T_\ell = T_{2,\ell} \circ T_{1,\ell}$ with $
    T_{1,\ell}(\xi_1,\xi_2) = (x_1,\xi_2) \coloneqq
    \left(\xi_1\mathrm{e}^{s_{\ell,2}(\xi_2)} + t_{\ell,2} (\xi_2), \xi_2
    \right), $ and $ T_{2,\ell}(x_1,\xi_2) = (x_1,x_2) \coloneqq \left(x_1,
    \xi_2\mathrm{e}^{s_{\ell,1}(x_1)} + t_{\ell,1} (x_1) \right) $ we have 
    \begin{align*}
		  \nabla T_{1,\ell}(\xi_1,\xi_2) = 
		  \begin{pmatrix}
		    \mathrm{diag} \left( \mathrm{e}^{s_{\ell,2}(\xi_2)} \right) & \mathrm{diag} \left( \nabla_{\xi_2} \left( \xi_1 \mathrm{e}^{s_{\ell,2}(\xi_2)} + t_{\ell,2} (\xi_2) \right) \right)\\
		    0 & I_{d_2}
		  \end{pmatrix}
    \end{align*}
    so that $ \det \nabla T_{1,\ell}(\xi_1,\xi_2) =  \prod_{k=1}^{d_1}
    \mathrm{e}^{\left( s_{\ell,2}(\xi_2)\right)_k} $ and similarly for $\nabla
    T_{2,\ell}$.  Applying the chain rule in \eqref{eq:net}, noting that the
    Jacobian of $P_\ell$ is just $P_\ell^\tT$ with $\det P_\ell^\tT=1 $, and that
    $\det (A B) = \det(A) \det(B)$, we conclude 
    \begin{align*}
		  \log( |\det \left(\nabla T(\xi) \right)|)
		  = \sum_{\ell = 1}^L \left( \operatorname{sum}\left(s_{\ell,2} \left( (P_\ell \xi^{\ell} )_2 \right)\right) 
		  + \operatorname{sum}\left(s_{\ell,1}\left( (T_{1,\ell} P_\ell \xi^{\ell} )_1 \right) \right)\right),
    \end{align*}
    where $\operatorname{sum}$ denotes the sum of the components of the respective vector,
    $\xi^{1} \coloneqq \xi$ and $\xi^{\ell} = T_{\ell-1} P_{\ell-1}
    \xi^{\ell-1}$, $\ell = 2,\ldots,L$.
\end{enumerate}

\section{Forward Model from GIXRF} \label{sec:application}
In this section, we consider a silicon nitride ($\mathrm{Si_3N_4}$)
lamellar grating on a silicon substrate. The grating  oxidized in a natural fashion resulting in a thin $\mathrm{SiO_2}$ layer.
A cross-section of the lamellar grating is shown in Fig. \ref{fig:grating}, left.
It can be characterized by seven parameters $x \in \mathbb R^d$, $d=7$, namely
the height ($h$) 
and middle-width (cd) of the line, 
the sidewall angle (swa),
the thickness of the covering oxide layer ($t_t$), 
the thickness of the etch offset of the covering oxide layer beside the lamella ($t_g$)
and additional layers on the substrate ($t_s$, $t_b$).

\begin{figure}[!htb]
  \begin{center}
    \includegraphics[height=0.23\textheight]{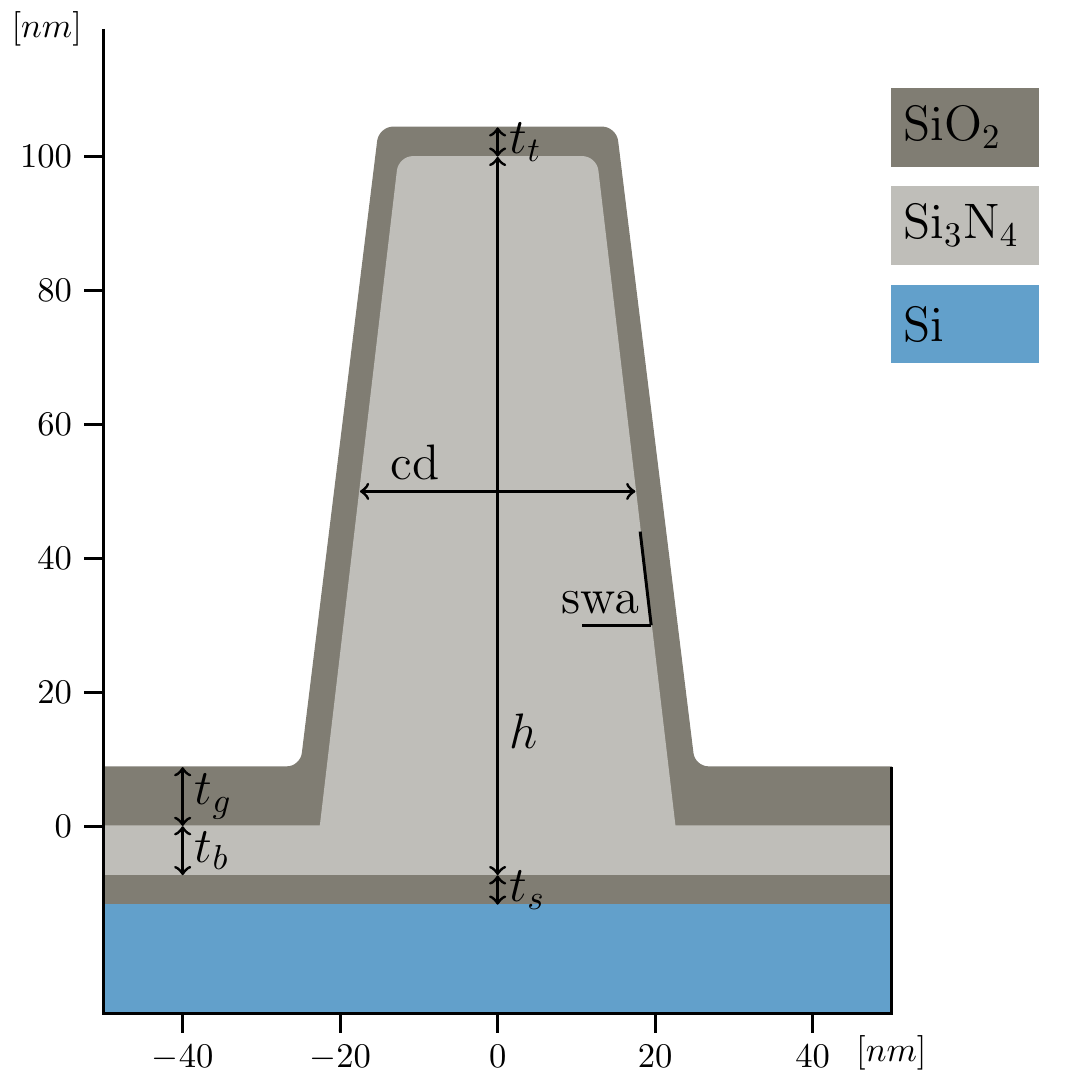}
		\includegraphics[height=0.23\textheight]{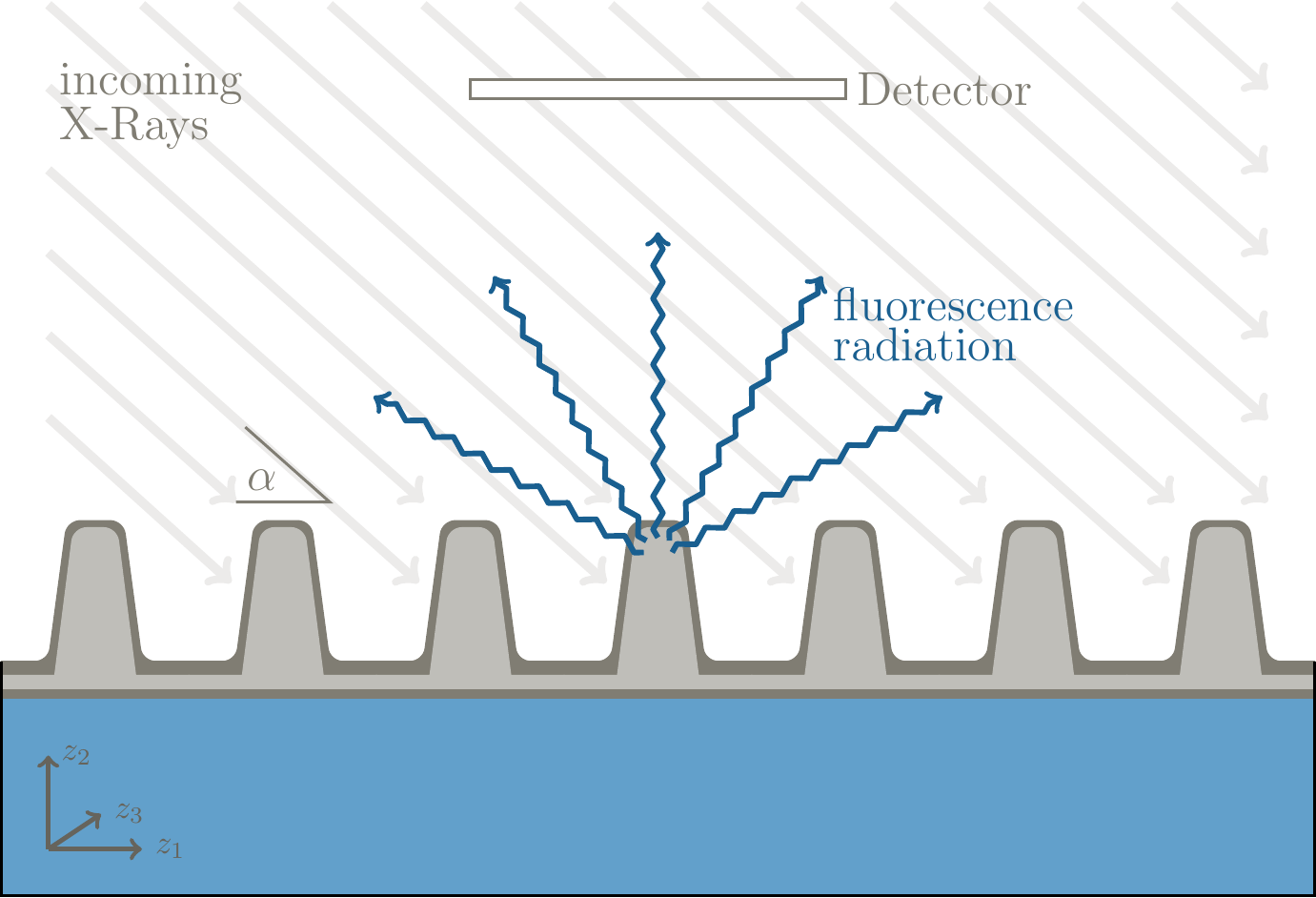}
  \end{center}
  \caption{ Left: Cross-section of one grating line with characterizing parameters.  
    Right: Cross-section of the grating with incoming X-rays (typically in $z_3$ direction) and emitted
    fluorescence radiation.
	}
  \label{fig:grating}
\end{figure}

To determine the parameters, we want to apply the GIXRF technique recently established to find
the geometry parameters of nanostructures and its atomic composition \cite{andrle2019grazing,honicke2020grazing}.

\paragraph{Experimental Setting.}
In GIXRF, the angles $\alpha_i$, $i=1,\ldots,n$ between 
an incident monochromatic X-ray beam and the sample surface is varied around a critical angle for total external reflection. Depending on the local field intensity of the X-ray radiation, 
atoms are ionized and are emitting a fluorescence radiation.   The resulting detected fluorescence radiation $(F(\alpha_i;x))_{i=1}^n$ is characteristic for the atom type. 

Besides direct experimental measurements of the fluorescence $F$,
its mathematical modeling at each angle $\alpha$
can be done in two steps, namely by computing the intensity of the local electromagnetic field $E$ arising from the incident wave (X-ray) with angle $\alpha$  
and then to use its modulus to obtain the fluorescence value $F(\alpha,x)$.
The propagation of $E$  is in general described by Maxwell’s equations and simplifies for our specific geometry  to the partial differential equation 
\begin{equation}
  \label{eqn:Maxwell}
  \operatorname{\nabla_z} \cdot \left(\mu(z;x)^{-1} 
  \operatorname{\nabla_z} E(z) \right)
  - \omega^2 \varepsilon(z;x) \, E(z) = 0, \quad z \in \mathbb R^2.
\end{equation}
Here $\omega$ is the frequency of the incident plane wave, 
$\varepsilon$ and $\mu$ are the permittivity and permeability depending on the grating parameters $x$, resp.
From the 2D distribution of 
$E$ in \eqref{eqn:Maxwell}, more precisely from the resulting field intensities
$|E|$, the fluorescence radiation  $F(\alpha;x)$ at the detector can be calculated 
by an extension of the Sherman equation \cite{soltwisch2018element}.
This computation requires just the appropriately scaled summation of the values of $|E|^2$ on the FEM mesh of the Maxwell solver used.
The really time consuming part is the numerical computation of $E$ for each angle $\alpha_i$, $i=1,\ldots,n$
by solving the PDE \eqref{eqn:Maxwell}.

\paragraph{Numerical treatment.}
In order to compute 
$E$ given by \eqref{eqn:Maxwell} with appropriate boundary conditions,
we employ the finite element method (FEM)  implemented in the
JCMsuite software package to discretize and solve the corresponding scattering
problem on a bounded computational unit cell in the weak formulation as described in \cite{JCMsuite}.  
This formulation yields a splitting of the complete $\mathbb{R}^2$ into an interior domain hosting the total field and an exterior domain, where only the purely outward radiating scattered field is present.  At the boundaries, Bloch-periodic boundary conditions are applied in the periodic lateral direction and an adaptive perfectly matched layer (PML) method was used to realize transparent boundary conditions. 


\paragraph{Surrogate NN model}
The evaluation of the fluorescence intensity $F$ for a single realization of
the parameters $x$ involves solving  \eqref{eqn:Maxwell} for each angle 
$\alpha_i$, $i=1,\dots,n$. Since this is very time consuming,  we learn instead a simple
feed-forward neural network with one hidden layer with $256$ nodes and
ReLU activation as surrogate 
$f\colon \mathbb{R}^d\to\mathbb{R}^n$ of $F$ such that the $L^2$ error between 
$\left(f_i(x) \right)_{i=1}^n$ and $\left(F(\alpha_i;x) \right)_{i=1}^n$ becomes minimal.
The network was trained 
on roughly $10^4$ sample pairs $(F,x)$ 
which were numerically generated as described above in a time consuming procedure.
The $L^2$-error of the surrogate, evaluated on a separate test set containing about
$10^3$ sample pairs was smaller than $2\cdot 10^{-3}$. This is sufficient for the
application, since we have measurement noise on the data of at least one order
of magnitude larger for both the experimental data and the synthetic
study. Hence we neglect the approximation error of the FE model and the
surrogate further on.

\section{Numerical Results} \label{sec:numerics}
In this section, we solve the statistical inverse problem of GIXRF using the Bayesian approach with an INN and the MCMC method based on our surrogate NN forward model for virtual and experimental data. 
Note that obtaining training data for the forwardNN took multiple
days of computation on a compute server with 120 CPUs.
We compare the resulting posterior distributions and the computational performance. 
To the best of our knowledge, this was not done in the literature so far
for any forward model. 

The fluorescence intensities for the silicon nitride layer of the lamellar 
grating depicted in Fig. \ref{fig:grating} were measured for $n=178$ different
incidence angles $\alpha_i$ ranging from $0.8^\circ$ to $14.75^\circ$.  The
seven parameters of the grating were considered to be uniformly distributed
according to the domains listed in Tab. \ref{tab:param_doms}. 

\begin{table}
  \setlength\extrarowheight{5pt}
  \centering
  \begin{tabular}{p{0.20\textwidth}p{0.14\textwidth}p{0.12\textwidth}p{0.12\textwidth}p{0.08\textwidth}p{0.08\textwidth}p{0.1\textwidth}p{0.07\textwidth}}
    {\bf Parameter:} & $h$ & $cd$ & $\mathrm{swa}$ & $t_t$ & $t_b$ & $t_g$ & $t_s$ \\[1ex]
    \hline
    {\bf Domain:} & $[85,\,100]$ & $[45,\,55]$ & $[76,\,88]$ & $[2,\,4]$ & $[0,\,5]$ & $[2,\,10]$ & $[0.1,\,3]$ \\[4ex]
  \end{tabular}
  \caption{Domains of the different parameters. Units are given in 
           $[\mathrm{nm}]$ for all parameters except the sidewall angle ($\mathrm{swa}$), which
           is given in $[^\circ]$.}
  \label{tab:param_doms}
\end{table}
To gain maximal performance of the MCMC method, we utilize an affine invariant
ensemble sampler for the Markov-Chain Monte Carlo algorithm \cite{emcee}.  This
allows parallel computations with multiple Markov chains and reduces the number
of method specific free parameters for the MCMC steps.  The
error parameter $b$ is usually a priori unknown and is thus
subject to expert knowledge. However, MCMC algorithms can introduce those as
additional posterior hyperparameters for reconstruction by a slight
modification of the prior.  Define $\tilde x \coloneqq (x,b)\sim P_{\tilde x}$,
where $P_{\tilde x}$ is given by the density $ p_{(x,b)}= p_x\,  p_b $ for
uniform $p_b$.  Using this error model in the likelihood in
\eqref{eqn:likelihood}, we obtain $p_{y|(x,b)}$.  Then the MCMC algorithm applied
to $p_{(x,b) | y}$  yields the distribution of the parameter $b$ as well. 

To approximate the INN that pushes the example density forward to the posterior
one, we learn an INN with $L=10$ layers.  Each subnetwork of each layer is
chosen as a two layer ReLU feedforward network with $256$ hidden nodes in each
layer.  The network is trained on the empirical counterpart of the loss
\eqref{eqn:loss_3} by sampling from a standard Gaussian distribution using an
adaptive moment estimation optimization (Adam) algorithm, see \cite{kingmaadam}.  
We trained for 80
epochs, an epoch consists of 40 parameter updates with a batch-size of 200. The
learning rate is lowered by a factor of 0.1 every 20 epochs.  The INN model is
built and trained with the freely available FrEIA software package
\footnote{https://github.com/VLL-HD/FrEIA}.

The time that is required to obtain $2\cdot 10^{4}$ posterior samples via the MCMC algorithm varies between $1.5$ and $3.5$ hours on a standard Laptop. In
comparison the training of the INN takes less than $20$ minutes, and  $2\cdot 10^{4}$ independent posterior samples are generated in less than one
second.  

\subsection{Synthetic Data}

As a first application, we perform a virtual experiment for the GIXRF to
obtain a problem with known ground truth. To approach that, we pick a
pair  $(x_\mathrm{true},f(x_\mathrm{true}))$  and vary
$b\in\{10^{-2},3\cdot10^{-2},10^{-1}\}$. 
Using these values of $b$ we obtain a synthetic noisy measurement according to
$y_\mathrm{meas}=y_\mathrm{true}+\varepsilon$
for various realizations of  $\varepsilon$ of $\mathcal{N}(0,b^2
\operatorname{diag}(y_\mathrm{true}^2))$. 
For the computation of the INN, we
set $b$ according to the true value, whereas MCMC is able to estimate a
distribution of the parameter $b$ for given $y_\mathrm{meas}$. For the application
of both MCMC and INN, we set $y = y_\mathrm{meas}$ and  $w =
y_\mathrm{meas}$ (note that we regard $w = y_\mathrm{meas}$ as a constant).
Fig.~\ref{fig:comp_methods} displays the one dimensional
marginals of the posterior for both the MCMC and the INN approach alongside
the ground truth for three different values of $b$.

\begin{figure}
  \begin{center}
    \begin{subfigure}[t]{0.32\textwidth}
      \includegraphics[width=\textwidth]{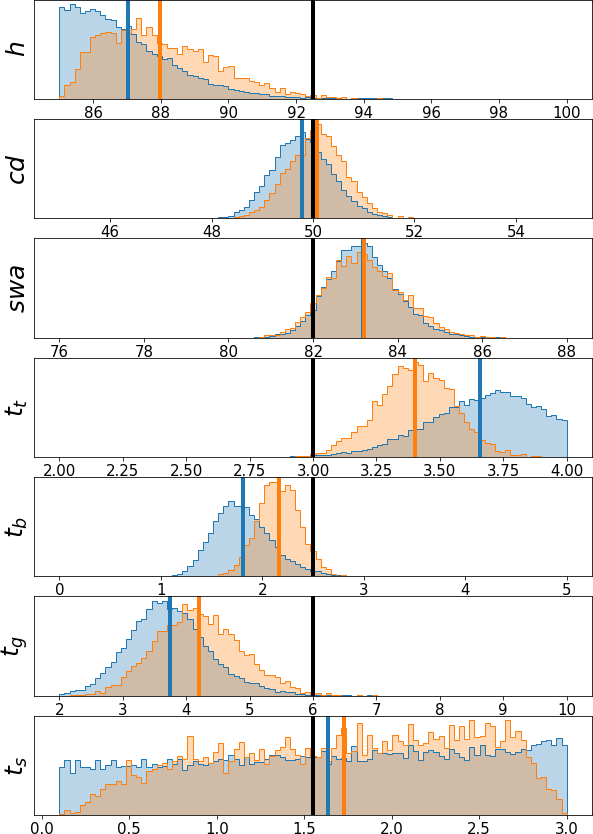}
      \caption{$b = 0.1$}
    \end{subfigure}
    \begin{subfigure}[t]{0.32\textwidth}
      \includegraphics[width=\textwidth]{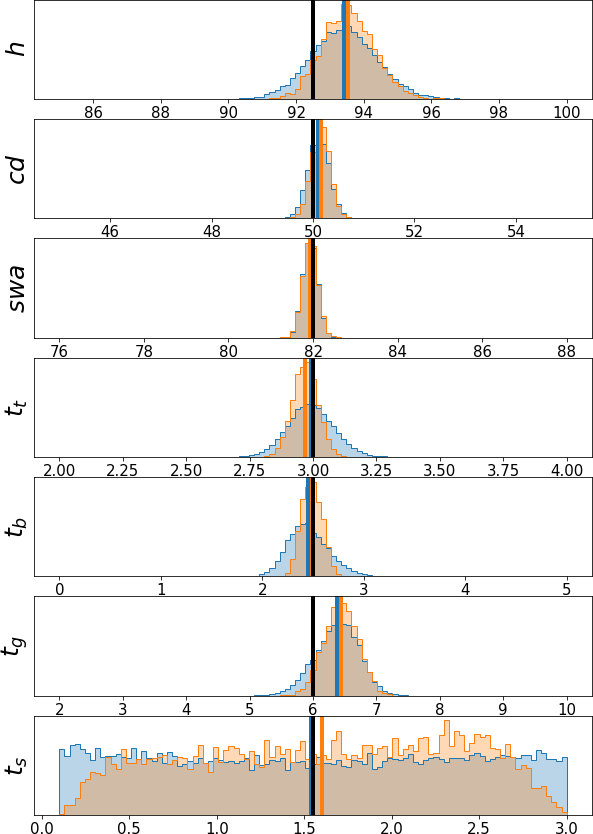}
      \caption{$b = 0.03$}
    \end{subfigure}
    \begin{subfigure}[t]{0.32\textwidth}
      \includegraphics[width=\textwidth]{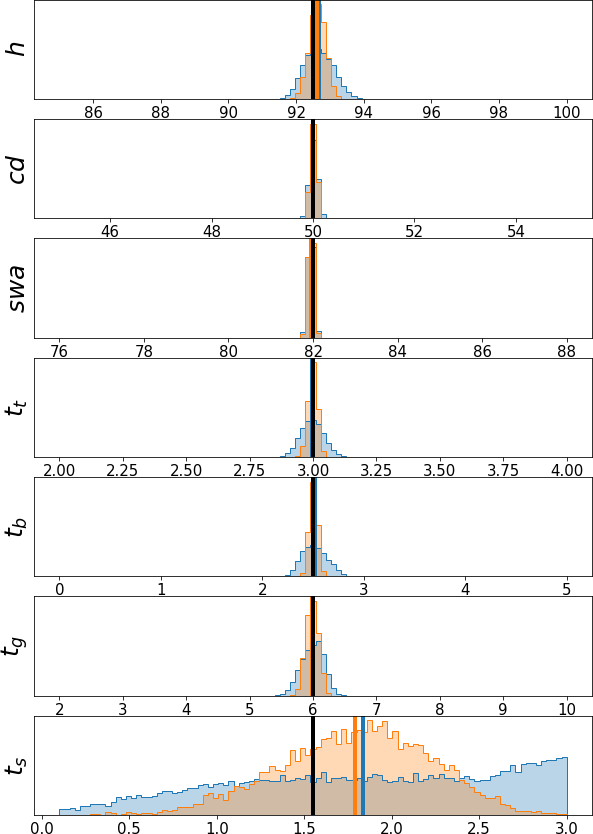}
      \caption{$b = 0.01$}
    \end{subfigure}
  \end{center}
  \caption{One dimensional marginals of posterior samples for different values of
           $b$ generated by MCMC (blue) and INN (orange) together with their means as solid lines.
					The values of the ground truth $x_i$, $i=1,\ldots,7$ are displayed as solid black lines.}
  \label{fig:comp_methods}
\end{figure}

First one sees that the width of the marginals
decreases as $b$ gets smaller. Comparing the INN and MCMC marginals shows an
almost identical shape and support for most of the parameters, where the
uncertainties obtained by the INN tend to be a bit smaller. The posterior means
of the MCMC and INN approach are in proximity of each other relative to the
domain size, in particular in the cases, where $b$ is smaller. It is remarkable
that although the reconstruction in $b = 0.1$ is far-off from the ground truth,
both methods agree in their estimate.  This can be explained by the large magnitude of
noise. The noise realization for $b = 0.1$ changes the measurements drastically
such that a different parameter configuration becomes more likely. Furthermore, both methods identify the last parameter to be the least sensitive.

\subsection{Experimental Data}

\begin{figure}
  \begin{center}
    \begin{subfigure}[b]{0.7\textwidth}
      \includegraphics[width=\textwidth]{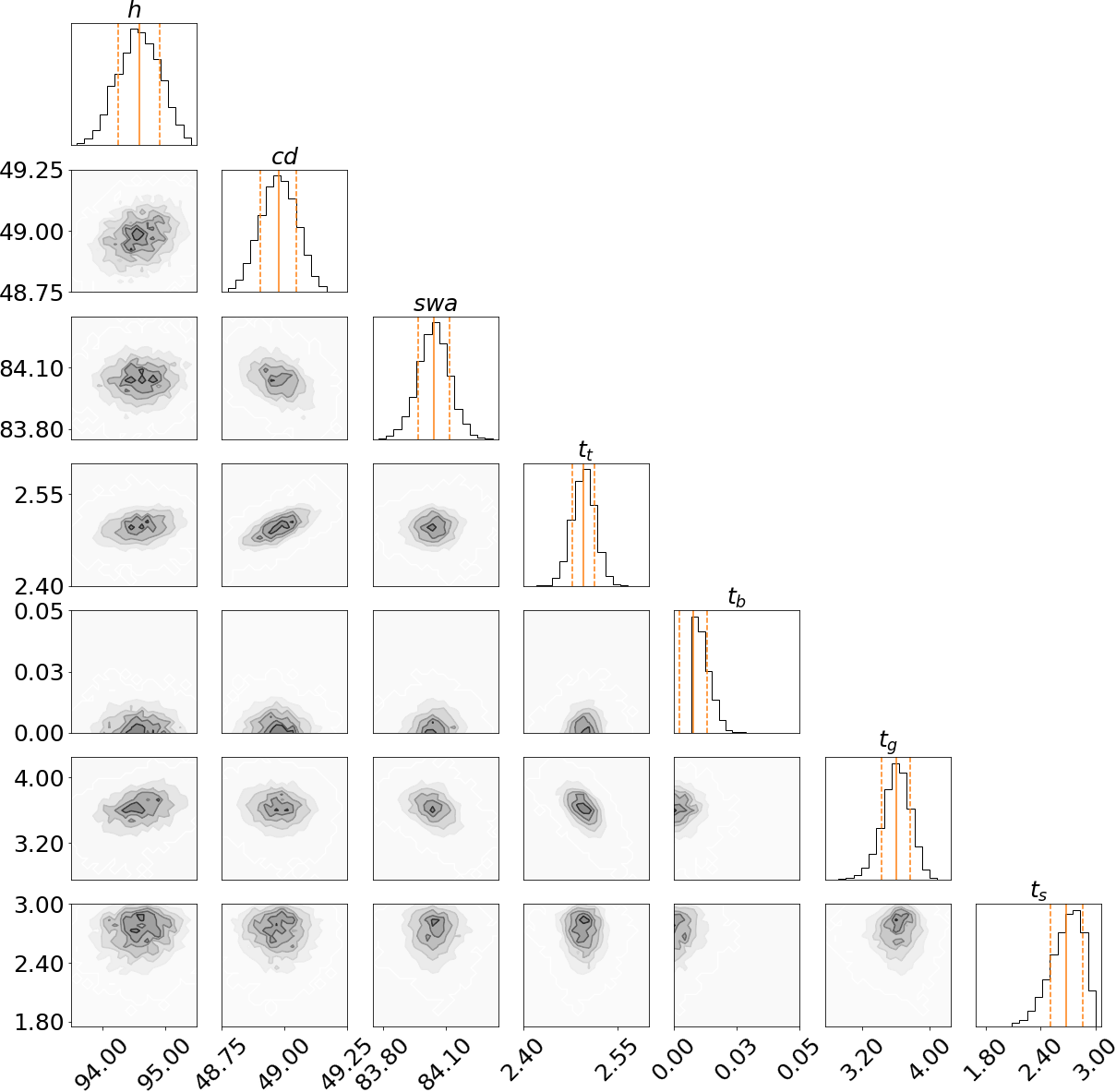}
      \caption{INN.}
    \end{subfigure}
    \begin{subfigure}[b]{0.7\textwidth}
      \includegraphics[width=\textwidth]{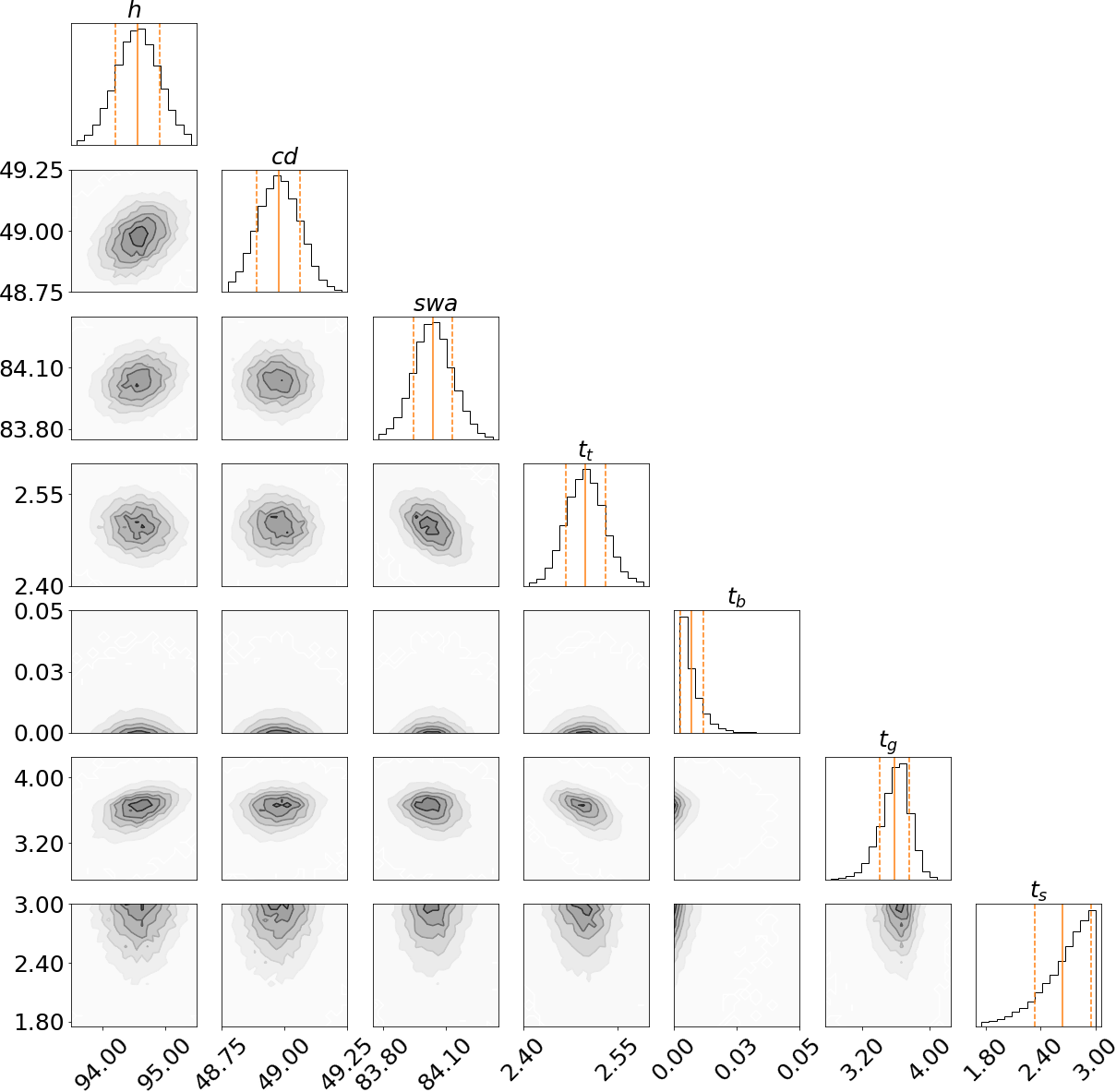}
      \caption{MCMC.}
    \end{subfigure}
  \end{center}
  \caption{Comparison of 2D densities of the posterior for $y_\mathrm{meas}$ calculated via the different methods. The straight vertical line depicts the mean, the dashed ones the standard deviation.}
  \label{fig:comp_for_exp_data}
\end{figure}

In the next experiment,  we use fluorescence measurements $y_{\mathrm{meas}} = \left(
F(\alpha_i;x) \right)_{i=1}^n$ obtained from an GIXRF experiment.  Here
neither exact values of the parameters $x$ nor of the noise level $b$ are
available.  In the noise model \eqref{eqn:likelihood} we use $w =
y_{\mathrm{meas}}$.  To train the INN we set the value $b$ to the mean $0.02$ of the
reconstructed $b$ obtained by the MCMC algorithm.
Fig.~\ref{fig:comp_for_exp_data} depicts the one and two dimensional marginals
of the posterior for both the MCMC and the INN calculations on a smaller
subset of the parameter domain. The means and supports of the posteriors agree well.   Fig.~\ref{fig:comp_for_exp_data} also shows
that the posterior is mostly a sharp Gaussian. 
The $Si_3N_4$ height etch offset $t_b$ is known to be close to zero for
the real grating, which explains the non-Gaussian accumulation at the boundary of the interval. Similar holds true for the height of the lower
$SiO_2$ layer.

\begin{figure}
  \begin{center}
    \includegraphics[width=\textwidth]{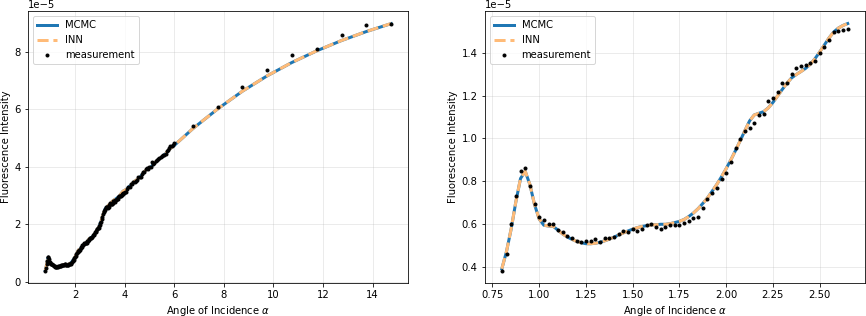}
  \end{center}
  \caption{Forward fit of posterior means obtained by INN (orange) and MCMC
           (blue). Measurement data are represented by black points. The right image is a zoom 
					into the left one.}
  \label{fig:forward_fit}
\end{figure}

Finally, we apply the forward model $f$ to the componentwise mean of the
sampled posterior values $x \in \mathbb R^7$.  The results are shown in
Fig.~\ref{fig:forward_fit}.  The resulting $f(x)$ fits both for the MCMC and for the INN quite well with the experimental measurement $y_{\mathrm{meas}}$.  The
differences in the mean and shape of the posteriors seem to be small and may be caused by small numerical or model errors. 

\section{Conclusions}\label{sec:conclusions}

We have shown that INNs provide comparable results to established MCMC methods in sampling posterior distributions of parameters in GIXRF, but outperform them in terms of computational time. Moreover INNs are more flexible for different applications and are expected to perform well in high dimensions. Since INNs are often used in very
high-dimensional problems, such as generative modeling of images, this suggests that our approach could scale well to more challenging
high-dimensional problems. 
So far, we considered only a single measurement $y_{\mathrm{meas}}$, but extensions to capture different measurements seem to be feasible. 
Furthermore, we will figure out, whether the noise parameter $b$ can be learned as well in the INN framework. Finally, we intend to have a closer look at multimodal distributions as done, e.g., in \cite{HN2020}.

\bibliographystyle{abbrv}
\bibliography{references}

\end{document}